\def\eqref#1{equation~\ref{#1}}
\def\1{\bm{1}}
\def\rve{{\mathbf{e}}}
\def\erve{{\textnormal{e}}}
\def\vr{{\bm{r}}}
\def\vs{{\bm{s}}}
\def\vx{{\bm{x}}}
\def\vz{{\bm{z}}}
\def\evs{{s}}
\def\evx{{x}}
\DeclareMathAlphabet{\mathsfit}{\encodingdefault}{\sfdefault}{m}{sl}
\SetMathAlphabet{\mathsfit}{bold}{\encodingdefault}{\sfdefault}{bx}{n}
\def\sA{{\mathbb{A}}}
\def\sI{{\mathbb{I}}}
\def\sN{{\mathbb{N}}}
\def\sP{{\mathbb{P}}}
\def\sX{{\mathbb{X}}}
\def\sY{{\mathbb{Y}}}
\newcommand{\R}{\mathbb{R}}
\newtheorem{proposition}{Proposition}
\newcommand\xrowht[2][0]{\addstackgap[.5\dimexpr#2\relax]{\vphantom{#1}}}
\def\BibTeX{{\rm B\kern-.05em{\sc i\kern-.025em b}\kern-.08em
    T\kern-.1667em\lower.7ex\hbox{E}\kern-.125emX}}
\def\BibTeX{{\rm B\kern-.05em{\sc i\kern-.025em b}\kern-.08em
    T\kern-.1667em\lower.7ex\hbox{E}\kern-.125emX}}
\newcommand{\linebreakand}{%
  \end{@IEEEauthorhalign}
  \hfill\mbox{}\par
  \mbox{}\hfill\begin{@IEEEauthorhalign}
}
\begin{document}
  \title{Time Series Representation Learning with Supervised Contrastive Temporal Transformer}
  \author{
   \IEEEauthorblockN{Yuansan Liu\IEEEauthorrefmark{1}, Sudanthi Wijewickrema\IEEEauthorrefmark{2}, Christofer Bester\IEEEauthorrefmark{2}, Stephen J. O'Leary\IEEEauthorrefmark{2}, James Bailey\IEEEauthorrefmark{1}}
  \IEEEauthorblockA{\IEEEauthorrefmark{1}\textit{School of Computing and Information Systems, The University of Melbourne, Melbourne, Australia} \\
  yuansanl@student.unimelb.edu.au, baileyj@unimelb.edu.au}

  \IEEEauthorblockA{\IEEEauthorrefmark{2}\textit{Department of Surgery (Otolaryngology) The University of Melbourne, Melbourne, Australia}\\
  \{sudanthi.wijewickrema,christofer.bester,sjoleary\}@unimelb.edu.au}
  }
  \maketitle

  \begin{abstract}
   Finding effective representations for time series data is a useful but challenging task. Several works utilize self-supervised or unsupervised learning methods to address this. However, there still remains the open question of how to leverage available label information for better representations. To answer this question, we exploit pre-existing techniques in time series and representation learning domains and develop a simple, yet novel fusion model, called: \textbf{S}upervised \textbf{CO}ntrastive \textbf{T}emporal \textbf{T}ransformer (SCOTT). We first investigate suitable augmentation methods for various types of time series data to assist with learning change-invariant representations. Secondly, we combine Transformer and Temporal Convolutional Networks in a simple way to efficiently learn both global and local features. Finally, we simplify Supervised Contrastive Loss for representation learning of labelled time series data. We preliminarily evaluate SCOTT on a downstream task, Time Series Classification, using 45 datasets from the UCR archive. The results show that with the representations learnt by SCOTT, even a weak classifier can perform similar to or better than existing state-of-the-art models (best performance on 23/45 datasets and highest rank against 9 baseline models). Afterwards, we investigate SCOTT's ability to address a real-world task, online Change Point Detection (CPD), on two datasets: a human activity dataset and a surgical patient dataset. We show that the model performs with high reliability and efficiency on the online CPD problem ($\sim$98\% and $\sim$97\% area under precision-recall curve respectively). Furthermore, we demonstrate the model's potential in tackling early detection and show it performs best compared to other candidates.\\
  \end{abstract}

  \begin{IEEEkeywords}
   time series, representation learning, supervised contrastive learning, transformer, temporal convolution
  \end{IEEEkeywords}

  \section{Introduction}

   A good representation encodes useful information from the original data into a vector, and makes it possible to analyze complex data effectively. Contrastive Learning, a well known branch of representation learning, has seen high activity in recent years due to its success in computer vision and natural language processing tasks. It extracts analogous features from similar objects and disparate features from non-similar ones, and consequently learns distinctive representations for downstream tasks such as classification. Several existing works employ self-supervised or unsupervised contrastive learning frameworks to learn time series representations and achieve state-of-the-art (SOTA) results \cite{eldele2021tsrl,yue2022ts2vec,yang2022timeclr}. Nonetheless, several challenges remain to be addressed. First, time series data have diverse temporal properties, making it critical to design proper augmentation strategies for different types of data \cite{iwana2021tsaugsuv}. Second, these data contains both global and local properties, necessitating the learning of both attributes efficiently and effectively. Third, despite limited access to information such as labels, it's crucial to leverage the available ones for better representation learning.


   In this paper, we build on established methods to address the aforementioned challenges in time series representation learning. We first investigate suitable augmentation methods for various types of time series data to support contrastive learning \cite{chen2020simple,khosla2020supervised}. In addition, we introduce a unique augmentation strategy for the online Change Point Detection (CPD) task. Secondly, we merge Transformer and Temporal Convolutional Networks by replacing the former's feed forward convolutions with the latter's dilated causal convolutions. With such simple fusion, the model learns both global and local features from time series data. Thirdly, we adapt and simplify Supervised Contrastive (SupCon) Loss \cite{khosla2020supervised} to leverage label information and make it work efficiently with time series data. Combining all these components, we introduce the \textbf{S}upervised \textbf{CO}ntrastive \textbf{T}emporal \textbf{T}ransformer (SCOTT). We do an initial evaluation via a downstream task, Time Series Classification (TSC), using 45 datasets from the UCR Time Series Archive \footnote{https://www.cs.ucr.edu/~eamonn/time\_series\_data\_2018/} \cite{UCRArchive2018}. The results show that with the  representations learnt by our model, a Multi Layer Perceptron which performs worst against baseline models, can now attain the highest rank, highlighting the effectiveness of the learnt representations. Then, we investigate the model's ability to tackle another real-world problem: online CPD. We test it on two datasets: the USC human activities dataset (USC-HAD) \footnote{https://sipi.usc.edu/had/} \cite{mi2012uschad} and intra-operative electrocochleography (ECochG) data \cite{campbell2016}. Finally, using ECochG data, we demonstrate the model's potential in addressing the early detection problem.

   The main contributions of our work are as follows.
    \begin{itemize}
      \item We investigate various data augmentation methods for different types of time series data, and propose an efficient augmentation strategy that works well with proposed model for online CPD problems.
      \item We develop a simple combination of Transformer and TCN to efficiently capture both global and local properties of time series data. We call this novel structure Temporal-Transformer.
      \item We adapt supervised contrastive learning for time series data and simplify the implementation of SupCon Loss function to improve the model's efficiency.
      \item By combining all three components, we introduce SCOTT to efficiently and effectively learn representations from labelled time series data.
      \item We experimentally show that our proposed SCOTT performs well on a large range of real-world datasets, and is also reliable and efficient when applied to online CPD tasks (around 98\% area under precision-recall curve). These indicate its potential usage for real-world problems.
    \end{itemize}

  \section{Related Works} \label{lit}

   In this section, we first discuss contrastive learning and its usage on time series data. Next, we briefly discuss SOTA models for TSC and online CPD.
   \subsection{Contrastive Learning}
    Recently, self-supervised contrastive learning has gained increasing attention thanks to the success of several works. SimCLR \cite{chen2020simple} introduces contrastive loss to learn representations of visual inputs by maximizing the agreement between two augmentations of the same instance. BYOL \cite{grill2020byol} builds two networks to interact and learn from each other, and achieves SOTA results without any negative samples. MoCo \cite{he2020momentum} introduces an unsupervised learning framework to learn visual representations. On the other hand, supervised contrastive learning has been proposed to utilize label information in image representation learning \cite{khosla2020supervised}.

    The successes of self-supervised contrastive learning on CV and NLP tasks have encouraged researchers to apply it to time series tasks in recent years. TS-TCC \cite{eldele2021tsrl} introduces a temporal contrasting module together with a contextual contrasting module, to learn robust and useful representations of time series. TS-CP$^2$ \cite{deldari2021tscpd} utilize InfoNCE loss and a negative sampling method \cite{sohn2016negsamp} for time series CPD tasks. TimeCLR \cite{yang2022timeclr} combines the strengths of SimCLR and Inception Time \cite{ismail2020itime} to address the TSC problem. TS2Vec \cite{yue2022ts2vec} proposes a hierarchical contrasting to learn universal representations for time series data and achieves SOTA results. Apart from temporal domain, information residing in frequency domain  also facilitates to improve time series representation learning \cite{yang2022unsupervised,liu2023temporal}. Note that such existing methods focus on self-supervised or unsupervised learning, as opposed to our supervised learning basis.

   \subsection{Time Series Classification and Change Point Detection}
    There exist numerous algorithms to address TSC problems. Traditional methods measure the similarity between time series using distance measures \cite{goreck2013dervat}, or extract useful patterns from subseries, to determine the class of the time series \cite{ye2011shapelet,hills2014st}. Dictionary methods such as Bag-of-SFA-Symbols (BOSS) \cite{schafer2015boss} focus on the frequency of subseries. Ensemble methods like COTE, HIVE-COTE, TS-CHIEF, and Rocket combine several techniques and achieve great success on TSC tasks \cite{lines2018hivecote,shifaz2020chief,dempster2020rocket}. Apart from these, Deep Learning (DL) methods like Recurrent Neural Networks, Convolutional Neural Networks, and Transformer have also be applied for TSC \cite{pham2021lstmphy,ismail2020itime,chen2022tcntrans}.

    As another important time series problem, CPD's solutions use both supervised and unsupervised learning frameworks. The supervised methods recognise it as a classification problem (binary or multi-class) \cite{feuz2015activity,zheng2008,reddy2010}. Unsupervised algorithms include probability density, probabilistic, clustering, and deep learning methods. Among these, Relative Unconstrained Least-Squares Importance Fitting (RuLSIF) \cite{yamada2011ruslif}, Bayesian Online CPD (BOCPD) \cite{adams2007bayesian}, and Gaussian Process \cite{gpts2013} are three SOTA methods for the online CPD task.

  \section{Proposed Methodology} \label{mod}
   In this section, we first discuss data augmentation methods for varying types of time series data and introduce our augmentation strategy for online CPD problem. Secondly, we outline components of the proposed learning framework. Finally, we present the objective function and explore its simplification. An overview of the SCOTT system is provided in Figure \ref{scott}.
   \begin{figure}[tbp]
       \centering
       \includegraphics[scale=0.4]{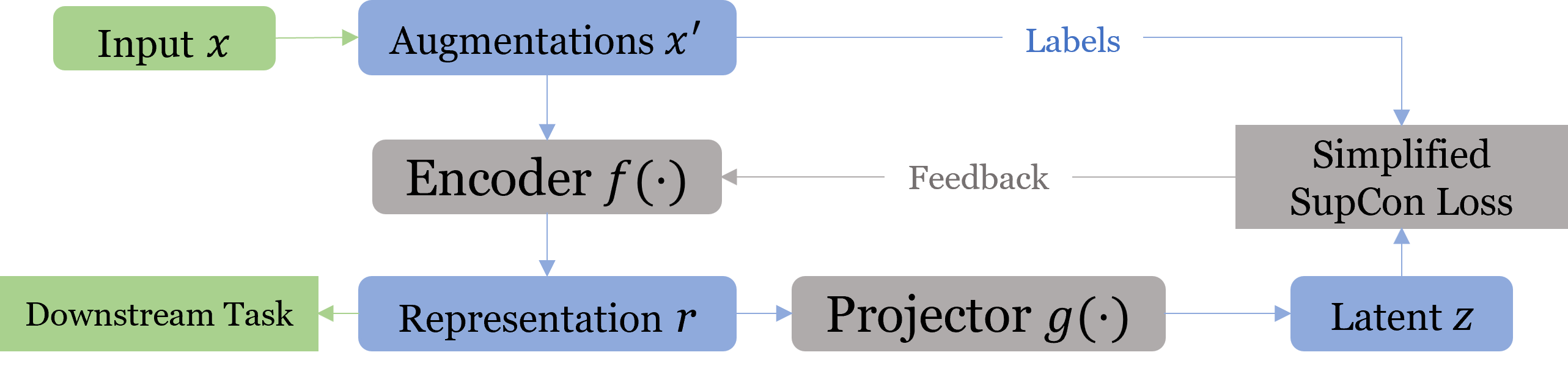}
       \caption{SCOTT Overview}
       \label{scott}
   \end{figure}
   \subsection{Data Augmentation}
    Contrastive learning typically requires augmented data to learn change-invariant representations. Augmentations such as jittering, permutation, scaling, and warping have been used in the literature for time series data \cite{iwana2021tsaugsuv,rashid2019tsaugar,eldele2021tsrl}, based on the nature of the dataset. For example, jittering, or adding noise to the original data, assumes that noise is common in the target data. Although this may be true for signal datasets like ECG, it may not be the case for value sensitive datasets like stock prices. Choosing an appropriate augmentation method is crucial as the wrong augmentations could alter the inherent properties of a dataset and cause errors in the developed models.

    Assuming we have a univariate time series $\vx = \{ \evx_1, \evx_2, \dots, \hdots, \evx_t \}, \evx_t\in \R, t \in \sN $, its $h$ segments $seg_{i}, i=\{1,2, \hdots h \}$, the noise $\rve = \{\erve_1, \erve_2, \hdots, \erve_t \}, \erve_i \sim \mathcal{N}(0, \sigma^2)$, and the scale factor $ \vs = \{ \evs_1, \evs_2, \hdots, \evs_t \}. \evs_i \in (0,1) $. A distorted segment $seg_{i_{dis}}$ is a segment expanded or shrunk via interpolation. $\{m_1, m_2, \hdots, m_h \} \in\{1,2,\hdots, h\}$ are random indices. We define several possible augmentation formulae:
    $$\begin{array}{lll}
      &\textit{\textbf{jittering}}: &\{\evx_1+\erve_1, \evx_2+\erve_2, \hdots, \evx_t+\erve_t \} \\
      &\textit{\textbf{scaling}}: & \{\evs_1*\evx_1, \evs_2*\evx_2, \hdots, \evs_t*\evx_t \} \\
      &\textit{\textbf{warping}}: & \{seg_{1}, \hdots,  seg_{j_{dis}}, \hdots, seg_{h} \} \\
      &\textit{\textbf{permutation}}: & \{seg_{m_1}, seg_{m_2} \hdots, seg_{m_h}\}
    \end{array}$$
    Note that warping will generate new time series with different length. We deal with this issue by interpolating the resulting series into the original length.

    We first select appropriate augmentation for datasets from the UCR archive. The selection is made based on the recommendations from several previous works \cite{iwana2021tsaugsuv,wen2021tsaug,rashid2019tsaugar} and a study we conduct on nine development datasets (details can be found in Section \ref{select}).

    Furthermore, based on these existing methods, we propose a novel augmentation strategy for the online CPD task. First, to operate in online scenario, we sample the data using a sliding window of length $\lambda$, moving the window forward by one step each time a new data point becomes available. To ensure that every point in the time series can be sampled, we add $\lambda-1$ points to the start of the time series.
    These extracted slices are labelled based on whether the last point is in the change state or not. Thus, we can assume that only the tail of a slice contains important information for detecting change points, which permits us to alter the initial part of a slice without changing any key features. As such, we use jittering and permutation on the first $\lambda - \beta$ points of a slice while keeping the last $\beta$ time points unchanged. Jittering is selected to enhance the model's robustness to noise, and permutation is used to reduce the dependence of the initial part of the time series and force the model to learn from the tail (They are also considered the most effective for `sensor' data according to the aforementioned augmentation study). Figure \ref{aug} shows an example of this sample-augmentations procedure on ECochG data.
    \begin{figure}[tbp]
      \centering
      \includegraphics[scale=0.45]{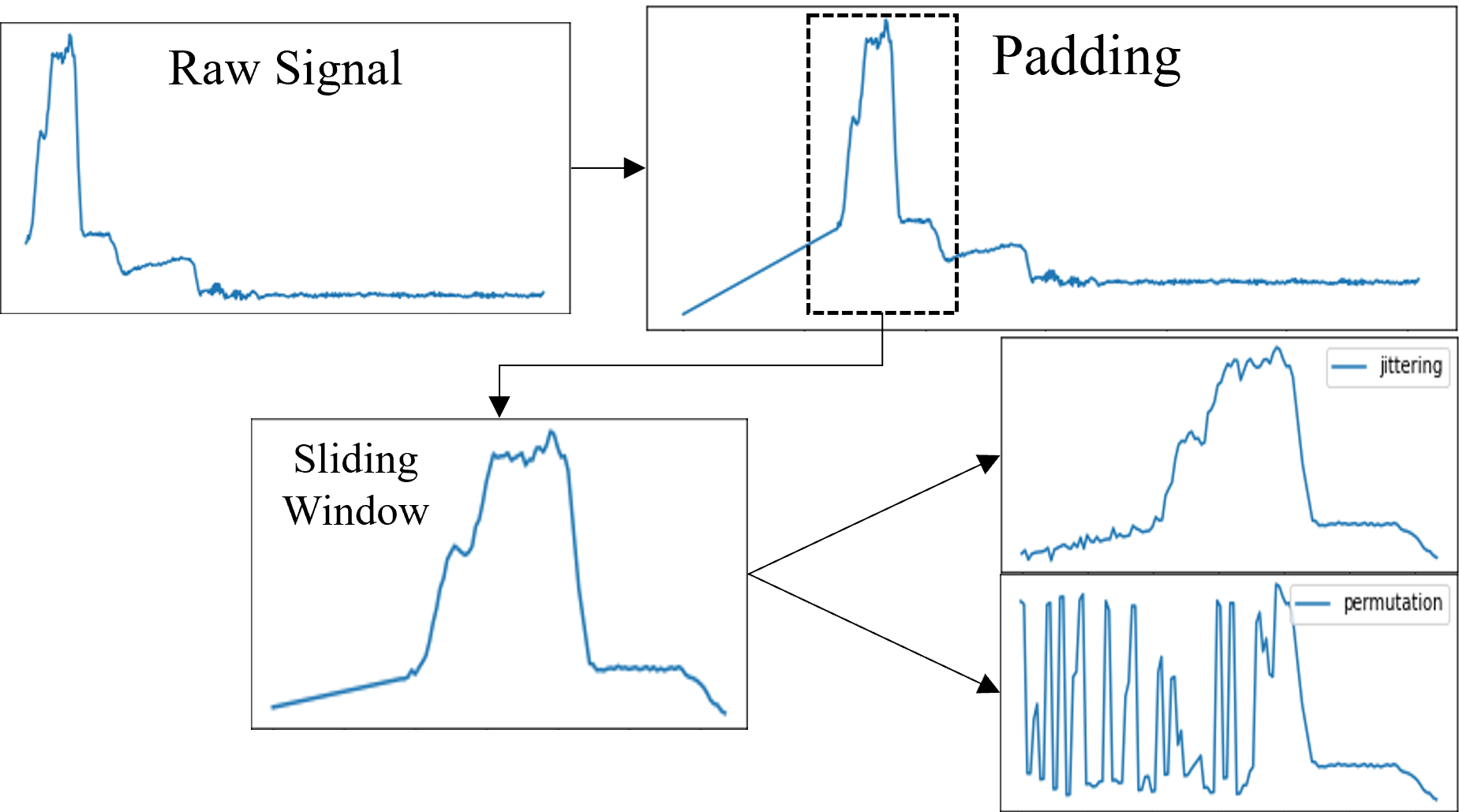}
      \caption{Augmentation for Online CPD}
      \label{aug}
    \end{figure}
    Additionally, in the case of unbalanced datasets, we oversample the under-represented class in the training data. \cite{yap2014over} argue that oversampling via duplicating the existing data could cause overfitting. To overcome this, we oversample the data via the aforementioned two types of augmentations.

   \subsection{Temporal-Transformer}
    After data has been augmented, we develop an encoder $f(\cdot)$ to extract the representations $\vr$ from them. The Transformer has shown its ability to capture global properties and been widely used to extract long term dependencies from different types of data \cite{vaswani2017attention,shen2019tensorized}. Convolution neural Networks (CNNs), on the other hand, have been proven to perform well on local processing via decomposition of depth-wise and point-wise convolution \cite{howard2017mobile,xie2017lgcnn}. Temporal Convolutional Networks (TCN), consisting of dilated causal convolutions, enhance a CNN's capability to learn temporal dependencies in sequential data like time series \cite{lea2016tcn}. To learn both global dependencies and local interactions simultaneously, one straightforward way is either stacking a Transformer upon TCN or vice versa \cite{chen2022tcntrans}. However, such direct combinations introduce unnecessary computational costs.

   Transformer's ability of extracting global properties is mostly contributed to by its self-attention mechanism \cite{zhang19dsagan,shen2019tensorized}. Thus, to capture the global dependencies of time series, we only need the attention matrix from the original Transformer.  Afterwards, we substitute the feed forward part of the original Transformer with the dilated casual convolutions of TCN. With this simple fusion of Transformer and TCN, we propose the encoder: Temporal-Transformer that efficiently learns both global and local attributes simultaneously. The structure of the encoder is shown in Figure \ref{temtrans}, the self-attention layer learns global dependencies of the input series and the following dilated causal convolutions will improve the learning outcome by capturing local interactions. The residual connections, which bypass these two blocks, help with preserving the important information from original data \cite{he2015deep}.
    \begin{figure}[tbp]
        \centering
        \includegraphics[scale=0.4]{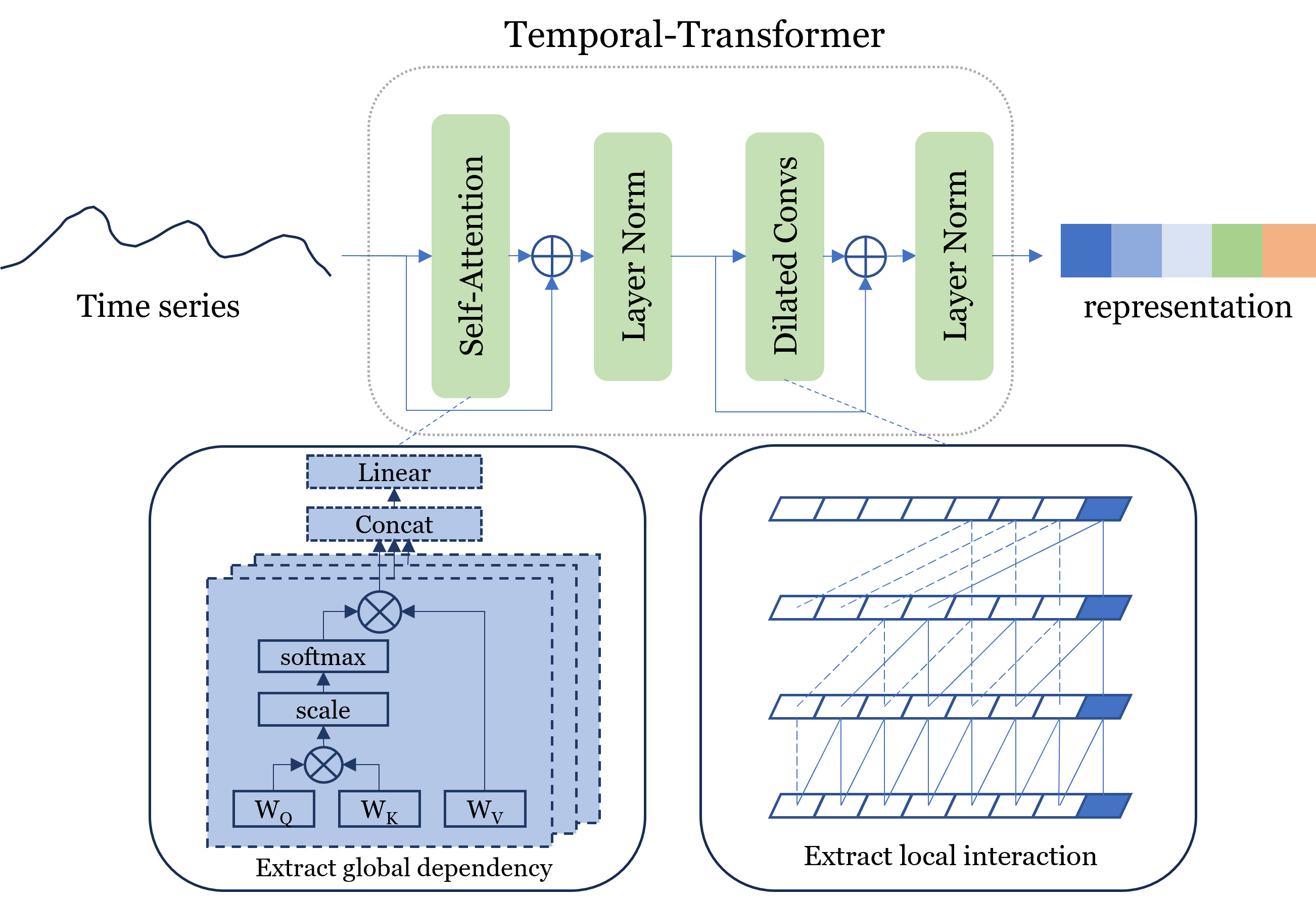}
        \caption{Temporal-Transformer}
        \label{temtrans}
    \end{figure}

   The significance of a projector network $g(\cdot)$ in SimCLR model is discussed in \cite{chen2020simple}. It maps the representation $\vr$ to a lower-dimensional vector $\vz$, making it applicable for the objective functions. We follow their design, add a Fully Connected Network as our projector.

   \subsection{Objective Function}
    In SimCLR \cite{chen2020simple}, a batch of $bn$ samples is augmented into pairs resulting in a dataset of $2bn$. Each pair itself is considered to be positive and the remaining $2bn-2$ points are treated as negative samples. Assuming $i, j \in \sA = \{ 1,2,3, \hdots , 2bn \}$, is the index of an augmented pair, let $sim(z_i, z_j)$ denote the similarity between the vectors $z_i$ and $z_j$. This can be measured as cosine similarity, dot product etc. Then, the objective function: `NT-Xent' loss $\mathcal{L}$ has the form:
    \begin{equation} \label{eq:1}
      \mathcal{L} = -\sum_{i,j \in \sA} \log \frac{\exp(sim(z_i, z_j)/\tau)}{\sum_{k\in \sX} \exp(sim(z_i, z_k)/\tau)}
    \end{equation}
    where $\sX = \sA \backslash \{ i \}$ and $\tau$ is the temperature parameter that determines the probability distribution.

    With equation (\ref{eq:1}), the model can maximize the agreement between positive pairs and learn the representations of the original data. One problem in this function is that different samples in the same class could be regarded as negative pairs. To handle such issues, the label information $y_{\omega}(\omega \in \sA)$ is introduced into the supervised contrastive loss $\mathcal{L}^{sup}$ \cite{khosla2020supervised}:
    \vspace{-3mm}
    \begin{multline}\label{eq:2}
        \mathcal{L}^{sup} = \\ -\sum_{i\in \sA} \frac{1}{|\sP(i)|} \sum_{p\in \sP(i)} \log \frac{\exp(sim(z_i, z_p)/\tau)}{\sum_{k \in \sX} \exp(sim(z_i, z_k)/\tau)}
    \end{multline}
    where $\sP(i) = \{ p\in \sX | y_i = y_p \}$ is the set of indices of the positive samples in the batch and $|\sP|$ is the cardinality.

    We simplify the implementation of (\ref{eq:2}) by calculating all instances together. The original loss function calculates the cost separately on different views (augmentations) of one instance, and in the final step averages these costs. In supervised contrastive loss, since the class label is used for the calculation, the instances from the same class can be calculated together according to the formula. Thus, there is no need to separate the calculation of different views. In $Proposition$ \ref{prop1}, we prove this simplification does not change the effect of the original function. Additionally, this simplification can slightly reduce computation time. We provide a detailed study of this in Section \ref{time}.

    \begin{proposition} \label{prop1}
      For the calculation of $\mathcal{L}^{sup}$, taking each augmentation as one instance of its own class has the same effect as calculating them separately and averaging at the end.
    \end{proposition}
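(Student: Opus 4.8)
The plan is to show that the two ways of organising the computation of $\mathcal{L}^{sup}$ amount to nothing more than a reindexing of one and the same finite sum, together with an overall positive multiplicative constant; since such a constant changes neither the set of minimisers nor the direction of the gradients, the two procedures have the same effect on training.

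First I would fix notation for the two organisations. Suppose the batch consists of $bn$ source instances, and instance $n$ produces the set of view‑indices $V_n \subset \sA$, so that $\sA = \bigsqcup_{n=1}^{bn} V_n$ is a disjoint partition (here $|V_n| = 2$, but nothing below depends on the value). Each index $i \in \sA$ carries the label $y_i$ of its source instance. For an anchor $i$ write $\sX = \sA \setminus \{ i \}$ and $\sP(i) = \{ p \in \sX \mid y_p = y_i \}$, exactly as in (\ref{eq:2}), and define the per‑anchor term
\[
  \ell(i) \;=\; -\,\frac{1}{|\sP(i)|}\sum_{p\in\sP(i)} \log\frac{\exp(sim(z_i,z_p)/\tau)}{\sum_{k\in\sX}\exp(sim(z_i,z_k)/\tau)} .
\]
The key observation is that $\ell(i)$ depends only on the anchor feature $z_i$, the collection $\{ z_k \}_{k\in\sA}$, and the labels $\{ y_k \}_{k\in\sA}$; it makes no reference to which block $V_n$ contains $i$. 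Hence the ``separate'' procedure — compute, for each source instance $n$, the average $\frac{1}{|V_n|}\sum_{i\in V_n}\ell(i)$ of its per‑view costs and then average over the $bn$ instances — produces
\[
  \mathcal{L}_{\mathrm{sep}} \;=\; \frac{1}{bn}\sum_{n=1}^{bn}\frac{1}{|V_n|}\sum_{i\in V_n}\ell(i),
\]
while the ``flattened'' procedure — regard all of $\sA$ as a single pool of labelled instances and evaluate (\ref{eq:2}) directly — produces $\mathcal{L}^{sup} = \sum_{i\in\sA}\ell(i)$, or $\frac{1}{|\sA|}\sum_{i\in\sA}\ell(i)$ under the natural averaging.

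The remaining steps are bookkeeping. Because $\{ V_n \}_n$ partitions $\sA$, we have $\sum_{n}\sum_{i\in V_n}\ell(i) = \sum_{i\in\sA}\ell(i)$; and because $|V_n|$ equals the same constant $v$ (the number of augmentations) for every $n$, $\mathcal{L}_{\mathrm{sep}} = \frac{1}{v\cdot bn}\sum_{i\in\sA}\ell(i) = \frac{1}{|\sA|}\sum_{i\in\sA}\ell(i)$. Thus $\mathcal{L}_{\mathrm{sep}}$ and $\mathcal{L}^{sup}$ differ by at most a global positive constant (and coincide exactly under the averaged convention), so they induce the same minimisers and the same gradient directions — i.e. the ``same effect''. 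The one point that needs genuine care, and which I expect to be the main obstacle, is verifying that $\sX$ and $\sP(i)$ are literally identical under the two viewpoints: in particular that, once each augmentation is treated ``as an instance of its own class'', the sibling view of the same source instance is placed in $\sP(i)$ (it shares the label) rather than excluded as a self‑pair, while the self‑index $i$ is excluded in both organisations. With that checked, and with the trivial remark that $|V_n|$ is constant in $n$ so the normalising factors pull out cleanly, the claimed equivalence $\mathcal{L}_{\mathrm{sep}} \propto \mathcal{L}^{sup}$ follows.
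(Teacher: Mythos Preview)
Your proposal is correct and follows essentially the same route as the paper: both arguments reduce the ``separate views, then average'' computation to a single sum over the flattened index set and observe that the two expressions differ only by the positive constant $1/n_v$ (your $1/v$), hence yield identical gradient directions and minimisers. Your treatment is in fact a bit more careful than the paper's in explicitly isolating the per-anchor term $\ell(i)$ and in flagging the need to check that $\sP(i)$ and $\sX$ coincide under the two viewpoints, but the underlying idea is the same reindexing-plus-proportionality argument.
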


    \begin{proof}
      Suppose we have $n_v$ views for each instance $x_i$ from a batch of size $bn$, where $i \in \sI=\{1, 2, \hdots, bn\}$, $\sY = \sI \backslash \{ i \}, \sY_q = \{z_{i_l} | l\neq q \}$. Then, the $\mathcal{L}^{sup}$ with separate views can be written as:
      \begin{eqnarray}\label{eq:3}
        \mathcal{L}^{sup} &=& -\sum_{i\in \sI} \frac{1}{|\sP(i)|} \sum_{p\in \sP(i)} \nonumber \\
        && \hspace*{2cm}\log \frac{\exp(sim(z_i, z_p)/\tau)}{\sum_{k \in \sY} \exp(sim(z_i, z_k)/\tau)}\nonumber \\
         &=& -\sum_{i\in \sI} \frac{1}{n_v} \sum_{j=1}^{n_v} \frac{1}{|\sP(i_q)|} \sum_{p\in \sP(i_q)} \nonumber \\
         && \hspace*{2cm} \log \frac{\exp(sim(z_{i_q}, z_p)/\tau)}{\sum_{k \in \sY_q} \exp(sim(z_{i_q}, z_k)/\tau)}\nonumber \\
         &=& -\sum_{i\in \sI} \sum_{j=1}^{n_v} \frac{1}{n_v|\sP(i_q)|} \sum_{p\in \sP(i_q)}  \\
         && \hspace*{2cm} \log \frac{\exp(sim(z_{i_q}, z_p)/\tau)}{\sum_{k \in \sY_q} \exp(sim(z_{i_q}, z_k)/\tau)}\nonumber
      \end{eqnarray}

      Assuming $i^* \in \sI^* = \{ 1_1, 1_2, \hdots, 1_{n_v}, \hdots, bn_{n_v} \}$, $\sY^* = \sI^*\setminus\{i^*\} $. We have:
      \begin{eqnarray} \label{eq:4}
        (\ref{eq:3}) &=& -\sum_{i^*\in \sI^*} \frac{1}{n_v|\sP(i^*)|} \sum_{p\in \sP(i^*)} \nonumber\\
        && \hspace*{2cm}\log \frac{\exp(sim(z_{i^*}, z_p)/\tau)}{\sum_{k \in \sY^*} \exp(sim(z_{i^*}, z_k)/\tau)}\nonumber \\
         &\sim& -\sum_{i^* \in \sI^*} \frac{1}{|\sP(i^*)|} \sum_{p\in \sP(i^*)} \\
         && \hspace*{2cm}\log \frac{\exp(sim(z_{i^*}, z_p)/\tau)}{\sum_{k \in \sY^*} \exp(sim(z_{i^*}, z_k)/\tau)}\nonumber
      \end{eqnarray}

      Note that (\ref{eq:4}) is the same as the formula for taking each augmentations as one instance of its own class, thus the proposition stands.
    \end{proof}

  \section{Experiments} \label{exp}
   \subsection{Datasets}
    \textbf{UCR Time Series Classification Archive} We evaluate SCOTT's performance on the TSC problem using 45 UCR datasets that cover all types in the archive: image, sensor, motion, spectro, traffic, device, simulate, audio and ECG data. Nine of these datasets, namely: BeetleFly, ChinaTown, ChlorCon, ECG200, GunPoint, InsecWing, Meat, PowerCon, and Wafer (one from each type), are used to develop the model, specifically for hyperparameter tuning. Details of these 45 datasets can be found in \cite{scottsup}.\\
    \textbf{USC-HAD}: This human activity dataset contains 12 activities recorded in five trials with $6-$axes. We follow similar pre-processing steps from the previous work \cite{deldari2021tscpd}: We use one accelerometer axis of two activities (one of which is the change state), from the first five human subjects. Randomly mixing them up and implementing the aforementioned sampling method, we get over $200,000$ time series.\\
    \textbf{ECochG}: ECochG data consists of the response of the inner ear to sound played during Cochlear Implant (CI) surgery. This surgery implants an electrode in the inner ear to simulate the function of the cochlea and is used to supplement the residual natural hearing of the hearing imapaired. However, there is a high probability that patients would lose their natural hearing due to trauma during the procedure. A previous study \cite{campbell2016} found that a $30\% +$ drop in `Cochlear Microphonic' (CM - one component of ECochG) amplitude may reflect damage to the natural hearing of the patient. This motivated researchers to detect these `traumatic drops' in CM amplitude to prevent trauma during CI surgery \cite{sudanthi2022ecochg}. In this dataset, we have records from 78 patients (78 time series) with varying length, ranging from 129 to 991 temporal points. By implementing the aforementioned sampling method, we get 30491 time series in total. We consider traumatic drops (as labelled by an expert) to be the change state and all others to be the non-change state.


   \subsection{Baseline Models \& Experimental Setup}
    For TSC baselines, we select traditional SOTA models with the leading results (from the website \footnote{https://www.timeseriesclassification.com/} and \cite{dempster2020rocket}): Shapelet (ST), BOSS, HIVE-COTE (HCOTE), TS-CHIEF, Rocket and Inception Time (ITime). We also add two SOTA Representation Learning (RL) models: SimCLR and TS2Vec. For fair comparison, an off-the-shelf Multi-layer Perceptron (MLP) classifier is used to classify the learnt representations from RL models including ours. We also list results from MLP without any RL models. For the online CPD problem, we select BOCPD, RuLSIF, and TS2Vec. We also list the results of \cite{sudanthi2022ecochg}, where a tree ensemble was found to perform well for the ECochG task. We follow the published code to implement the RL models (with necessary modifications).

    Nine selected datasets are utilized for hyperparameter tuning of our proposed model. To evaluate the learned representations, we incorporate an MLP block to enable classification applicability. Hyperparameters are randomly searched, selecting the optimal combination. The encoder comprises a temporal transformer with three heads of size 256, followed by three dilated convolution layers (dilation rates: 1, 4, and 16, respectively) each with 256 filters and a kernel size of 4. The projector is an MLP with hidden layers of 128 and 64 units, employing a 0.3 dropout rate and producing an 8-dimensional vector. Activation functions are 'relu', with a supervised contrastive loss temperature of 1.0. The initial learning rate is set at 0.001, halving when loss stagnates, over 300 training epochs with a batch size of 128. For classification, a three-layer MLP classifier is employed, consisting of hidden layers with 256 and 64 units, 'relu' activation, and 0.4 dropout, followed by an output layer with 'softmax'. These classifiers undergo training for 200 epochs with a batch size of 64, implementing an early-stopping mechanism based on validation loss. Identical configurations are maintained in the baseline MLP models for equitable comparisons. The code can be found in \cite{scottsup}.

    In the UCR archive, all the datasets have already been split into training and testing sets. We use these subsets for the training and evaluation of our models respectively.

    In ECochG data, change points are defined based on the ratio between peak and current value. When this ratio reaches a certain threshold (30\%), a change state occurs. Thus, we define the two parameters, window length ($\lambda$) and tail length ($\beta$), based on ECochG data. We observe that the furthest distance from a peak to its nearest trough is 50 points. Hence, we set $\beta=50$. The selection of $\lambda$ is empirical and addresses the need to have enough instances in the leading part of the time series to provide successful augmentations. As such, we set the leading part of the time series to be twice as long as the tail:  $\lambda-\beta=2\beta=100$, resulting in $\lambda=150$. As a rising trend is considered to be the non-change state, we pad the first 149 points of the ECochG data using a rising trend. To ensure the functionality of standard CPD algorithms (BOCPD and RuLSIF), in this scenario, we define a scoring system based on ``30\% drop" threshold to assist with final decision making. The Human activity data is normal (based on a hypothesis test \cite{ralph1973normtest}), and the difference between the two states lies in the variance. We sample and augment the human activity data using the method discussed in Section \ref{mod} and pad the start of the time series with 149 normally distributed points based on the parameters of the non-change state (we fit the data to a normal distribution and get the best fit with $\mathcal{N}(1, 0.5)$). We pad both the training and testing data to ensure each original point of the time series is considered in the calculations. We only augment the training dataset and leave the testing dataset unchanged in order to preserve its integrity.

    The experiments are undertaken using a High Performance Computer (HPC) with Intel Xeon CPU E5-2650 v4 (2.20GHz) and NVIDIA P100 GPU (12G).

   \subsection{Experimental Results}
    \subsubsection{Selection of Data Augmentation Method} \label{select}
     We evaluate different data augmentation methods on nine development datasets by comparing accuracy obtained with them. We use the aforementioned augmentation methods to create one view of the data and keep the original data as another view. The results are shown in Table \ref{ucraug}.
        \begin{table}[bp]
           \centering
           \caption{Accuracy of different augmentations on development datasets}
           \scalebox{0.8}{%
           \begin{tabular}{|c|ccccc|}
               \hline
               \textbf{Datasets} & \textbf{Type} & \textbf{Warping} & \textbf{Permutation} & \textbf{Scaling} & \textbf{Jittering} \\ \hline \hline \xrowht{5pt}
               BeetleFly & IMAGE & \textbf{1.0000} & 0.8500 & 0.9500 & 0.9000 \\
               ChinaTown & TRAFFIC & 0.9825 & \textbf{0.9854} & 0.9796 & 0.9738 \\
               ChlorCon & SIMULATE & 0.8128 & 0.8805 & 0.8576 & \textbf{0.8974} \\
               ECG200 & ECG & 0.9000 & 0.8900 & 0.9300 & \textbf{0.9400} \\
               GunPoint & MOTION & \textbf{0.9733} & 0.9533 & 0.9667 & 0.9667 \\
               InsecWing & AUDIO & 0.6288 & 0.6217 & 0.6455 & \textbf{0.6566} \\
               Meat & SPECTRO & 0.9167 & \textbf{0.9667} & \textbf{0.9667} & \textbf{0.9667} \\
               PowerCon & DEVICE & 0.9944 & \textbf{1.0000} & \textbf{1.0000} & \textbf{1.0000} \\
               Wafer & SENSOR & 0.9958 & \textbf{0.9968} & 0.9958 & 0.9966 \\ \hline
           \end{tabular}}
           \label{ucraug}
        \end{table}
     We can see that for the image type time series, warping and scaling get the top two results, which suggests they could be suitable augmentations for other datasets of this type. For traffic and sensor data, permutation has the best results, while jittering and warping show second best results. Thus, they can be good options for these two types of time series data. For ECG, simulate, and audio data where noise is common, jittering is the best augmentation method. Both spectro and device get three best candidates (permutation, scaling, and jittering) as appropriate augmentations.

    \subsubsection{Time Series Classification}
     Due to space limit, we provide detailed evaluation results for TSC in \cite{scottsup}. Here, we show statistical result using Critical Difference Diagram \cite{demsar2006stateva} of each model's average ranking in Figure \ref{ucr_cd}. A black line connects models whose pairwise accuracy was not significantly different (as determined using Wilcoxon signed-rank tests with Holm correction).

      \begin{figure}[bp]
          \centering
          \includegraphics[scale=0.5]{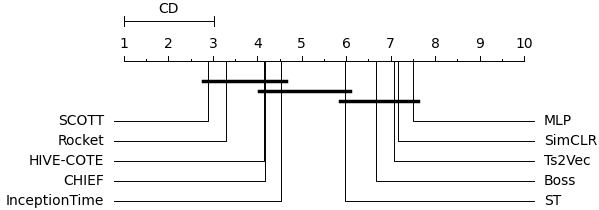}
          \caption{Critical Difference of model average ranking}
          \label{ucr_cd}
      \end{figure}

      We observe that our model's results are close to the SOTA models or better (for 23 datasets). Inception Time achieved the second best results with 10 wins. As to the average ranks of each model. SCOTT is slightly better than Rocket (2.90 compared to 3.29), and ranks first among all evaluated models. On the other hand, MLP shows the worst results. This suggests SCOTT is successful in extracting useful representations from time series data, that enables a relatively weak classifier to become competitive with the SOTA models. These observations highlight the potential of supervised contrastive learning for labelled time series representation learning and its downstream classification task.

      To better understand how SCOTT supports TSC, we visualize some datasets before (left) and after (right) representation learning via our model. We map each instance of a dataset onto a 2-dimensional vector using t-SNE \cite{vandermaaten08a} and visualize these instances using different colours according to their class. As shown in Figure \ref{tsne}, instances from different classes are more separable after representation learning. This simplifies the subsequent classification problem.
      \begin{figure}[tbp]
       \centering
       \includegraphics[scale=0.38]{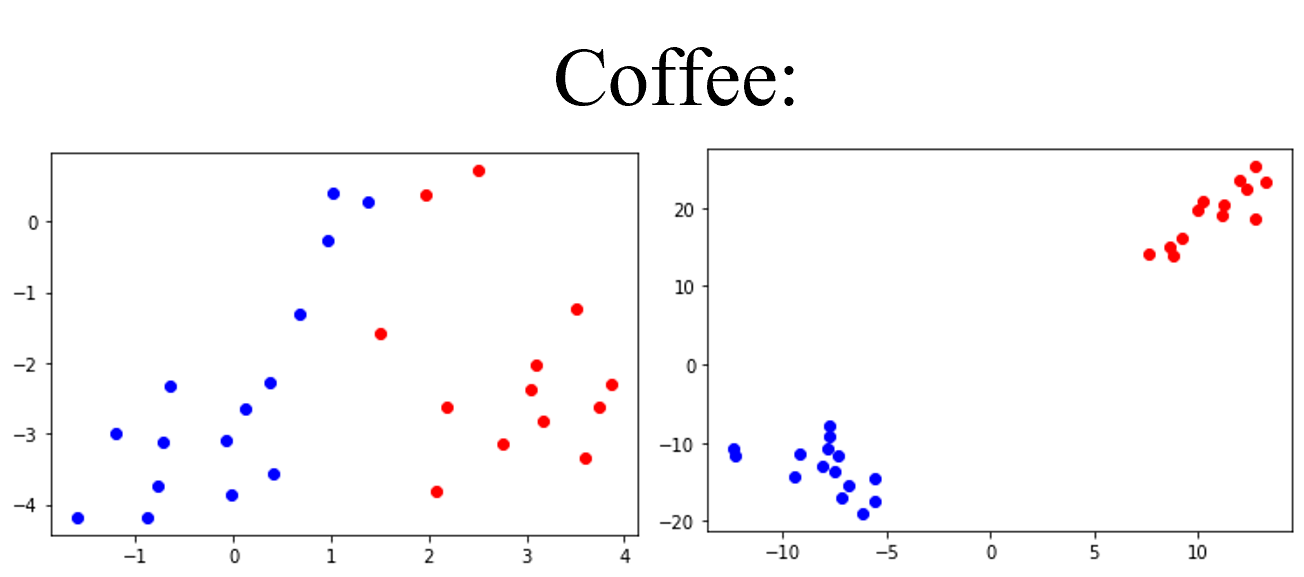}
       \includegraphics[scale=0.38]{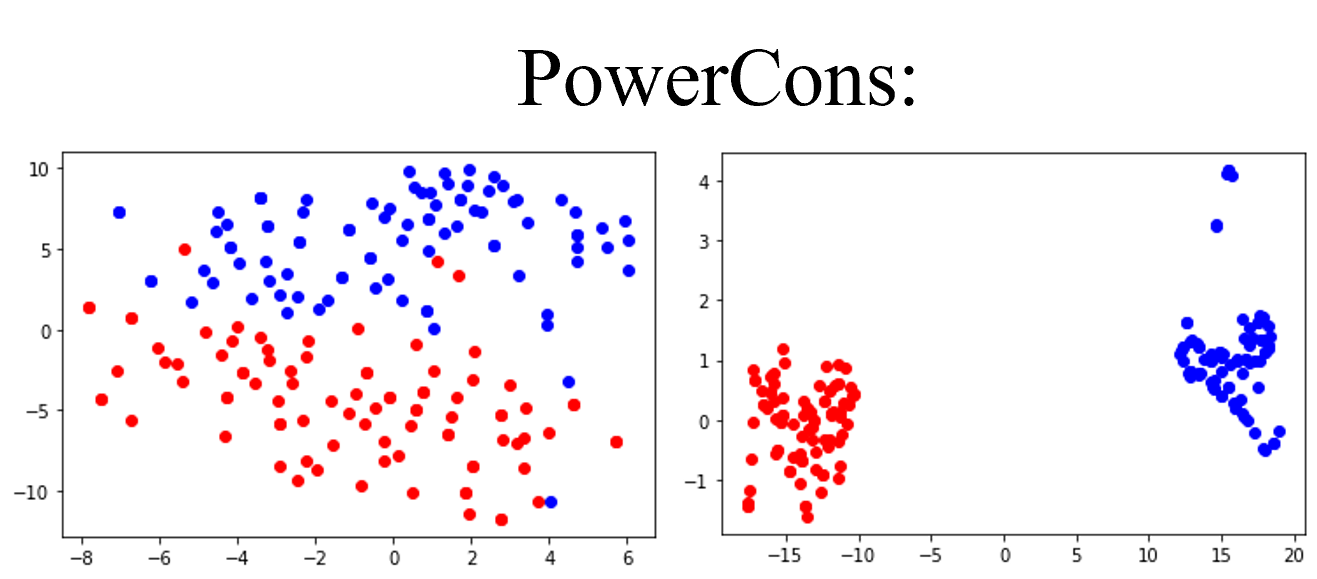}
       \includegraphics[scale=0.38]{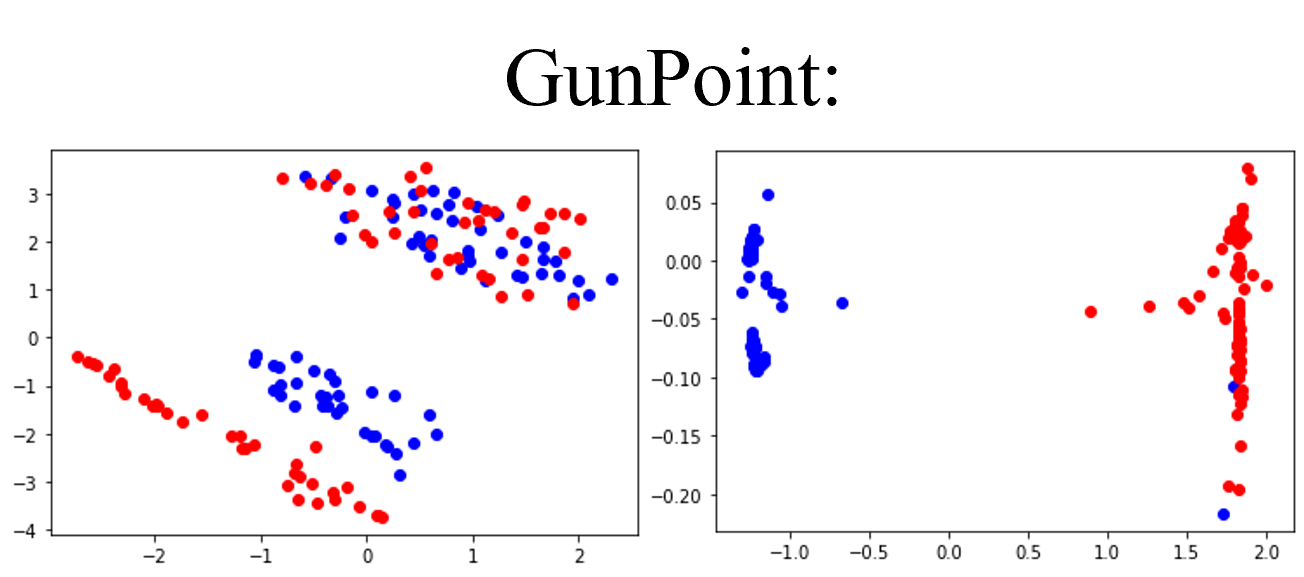}
       \includegraphics[scale=0.38]{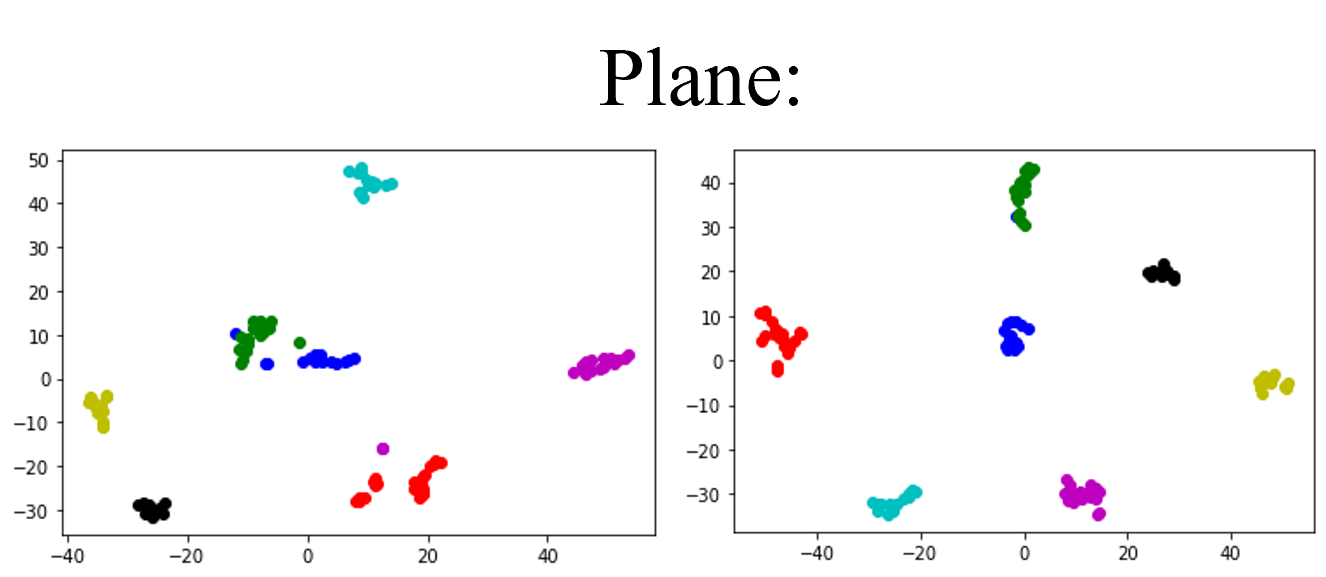}
       \caption{t-SNE Plots}
       \label{tsne}
      \end{figure}

    \subsubsection{Online Change Point Detection}
     We use streaming data to evaluate SCOTT for online CPD. In this type of problem, the model can only see data obtained on or before the current time. To address the CPD problem, we first sample the data using the aforementioned method and shuffle the training set. Thus, SCOTT should learn the representation based only on the current value (the last point in the current sliding window) and those of the previous points within the window. Once trained, SCOTT is expected to produce instructive representations that guide the classifier to make correct predictions. Then, we train the classifier using the learnt representations and their corresponding labels to predict whether the current point is in a change state. Here for convenience, we again use an MLP classifier to do the classification task.

     During inference, the system anticipates a continuous incoming signal. We simulate this process using the entire signal for each patient: We remove the first value in the current sliding window and appending the incoming data at the end. We annotate whenever the system returns a positive prediction. Figure \ref{cmexp} shows annotated ECochG signals of 4 new patients (unseen to the model) after this simulation, from which we can see the system also works well with these new data as most traumatic drops can be detected.
     \begin{figure}[tbp]
         \centering
         \includegraphics[scale=0.35]{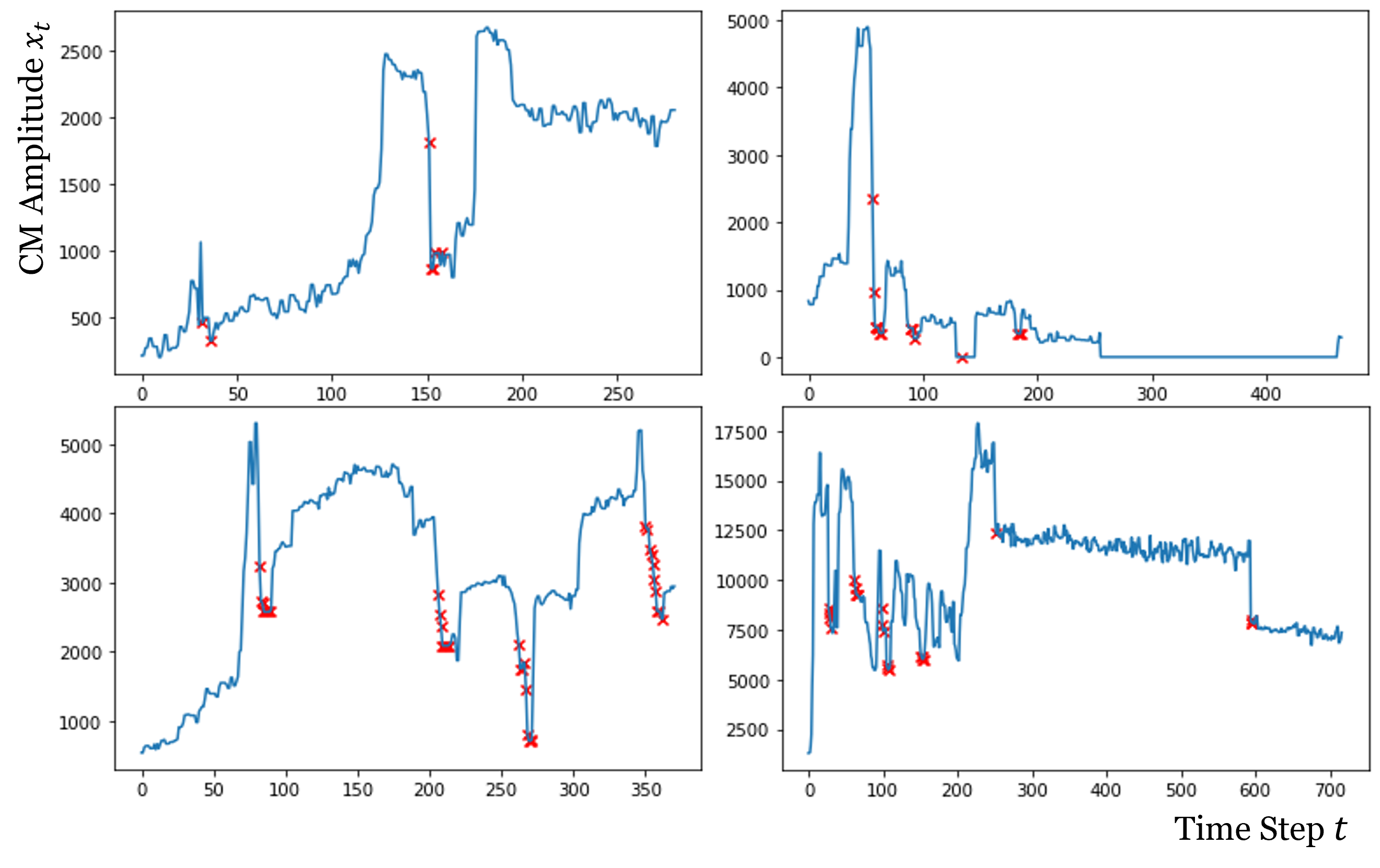}
         \caption{Annotated Patient Data: {\color{red}$\times$} stands for points that predicted to be inside traumatic drops (30\% drop).}
         \label{cmexp}
     \end{figure}

     The overall evaluation results are listed in Table \ref{ocpd_ecog}. According to the table, the proposed method shows the best performance for both datasets. The fact that our model performs well with respect to all metrics suggests that it can be used effectively in real-world applications.
     \begin{table}[tbp]
      \centering
      \caption{Online CPD Results}
      \scalebox{0.91}{%
        \begin{tabular}{|c|c|cc|}
            \hline \xrowht{5pt}
            \textbf{Models} & \textbf{Dataset} & \textbf{AUROC} & \textbf{AUPRC} \\ \hline\hline
            BOCPD & \multirow{5}{*}{ECochG} & 0.9369 & 0.7706 \\
            RuLSIF &  & 0.9283 & 0.6915 \\
            Tree Ensemble &  & 0.9423 & 0.6651 \\
            TS2Vec &  & 0.9813 & 0.7479 \\
            \textbf{SCOTT} & & \textbf{0.9989} & \textbf{0.9856} \\ \hline \hline \xrowht{5pt}
            BOCPD & \multirow{4}{*}{USC-HAD} & 0.9778 & 0.9594 \\
            RuLSIF &  & 0.9126 & 0.8913 \\
            TS2Vec & & 0.9802 & 0.9571 \\
            \textbf{SCOTT} &  & \textbf{0.9862} & \textbf{0.9678} \\ \hline
        \end{tabular}}
        \label{ocpd_ecog}
      \end{table}

    \subsubsection{Early Detection}
     One critical problem in real-world scenarios is detecting the change or abnormal states even before they occurred \cite{coughlin2021earlycovid,crosby2022early}. We briefly investigate our model's performance on the ECochG dataset under such requirements. We re-define the boundary of change states (traumatic drops), shifting it earlier by 1 to 30 steps. Then, we use the same pre-processing method to prepare the dataset. Note that under this new circumstance, the ``30\% drop" rule no longer stands. Hence, the standard CPD algorithms and the models in previous work \cite{sudanthi2022ecochg} are no longer applicable for the new problem (because part of these methods are derived from this 30\% rule). Therefore, we compare our model with three other classifiers, namely TS2Vec, Transformer (Trans), and MLP. We report Area under Precision-Recall curve and detailed Recall, Precision scores here as these metrics provide more representative assessment of model's effectiveness and overall performance. The results are shown in Figure \ref{early}.
     \begin{figure}[tbp]
         \centering
         \includegraphics[scale=0.55]{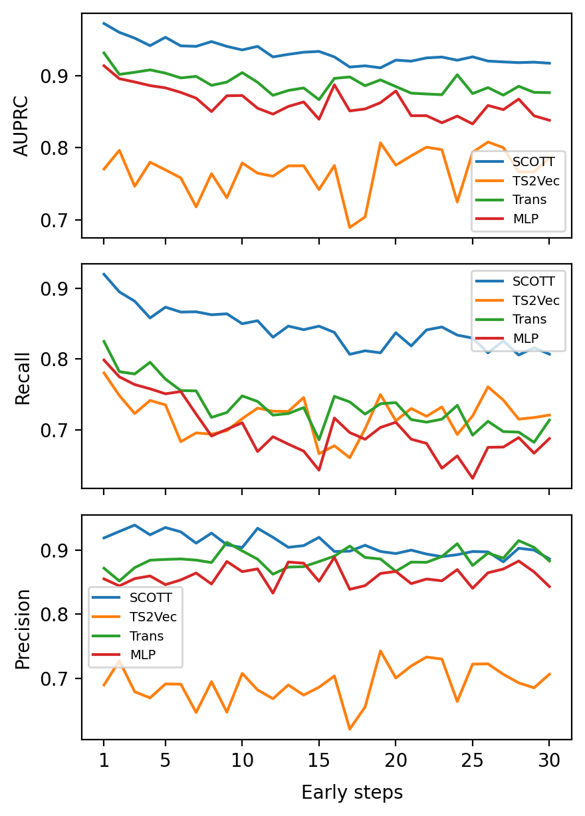}
         \caption{Early Detection Results}
         \label{early}
     \end{figure}
     All the models experienced performance decrease with respect to $Recall$, whereas $Precision$ is more stable comparatively. Note that the problem is to find out the ``traumatic drops" and those ``significant drops" are still inside positive class while we shifting the boundary. These ``significant drops" are, therefore, still distinct and easy to identify by the model, which could contribute to the stable $Precision$. However, ambiguities are introduced to the margin points. This makes the decision boundary become vague and lead to $Recall$'s decrease. Despite this model decay, we can observe our model still outperforms others and stays at a relative high level. This indicates our model's potential on early detection problems and highlights its future usage in CI surgery.

    \subsubsection{Execution Time} \label{time}
     We carry out separate experiments to investigate the comparative time cost when using original and simplified loss functions. We run the models on all the previous datasets. For each dataset we augment the instances into two views. Keeping the network configurations unchanged, we run 300 epochs with each loss functions. Here, the batch size of the simplified function $bn_{simp}$ is the original batch size $bn_{ori}$ multiplied by the number of views $n_v$, $bn_{simp} = bn_{ori} * n_v$. This is to ensure the number of processed instances are the same at each step for both functions. The total execution time of the two loss functions on all datasets are shown in Table \ref{comptime}. We can see our simplified loss function reduces the computation time.
     \begin{table}[tbp]
         \centering
         \caption{Total Execution Time (seconds)}
         \begin{tabular}{|c|cc|}
          \hline\xrowht{5pt}
             \textbf{Datasets} & \textbf{Original Loss} & \textbf{Simplified Loss} \\ \hline\hline \xrowht{5pt}
             UCR & 7306.99 & 7281.15 \\
             ECochG & 29837.28 & 29456.83 \\
             USC & 21457.47 & 21305.31 \\
             \hline
         \end{tabular}
         \label{comptime}
     \end{table}

     Furthermore, we investigate how variables such as the training set size and batch size affect execution time. The training set consists of randomly simulated data with length 256, augmented into two different views. We select training sizes of 512, 1024, 2048, 4096 and 8192 to train these datasets with five different batch sizes (here we refer to the original batch size $bn_{ori}$): 16, 32, 64, 128 and 256. We show the time difference between two loss functions ($original-simplified$) in the right-hand side of Figure \ref{td_var}.
     \begin{figure}[tbp]
         \centering
         \includegraphics[scale=0.35]{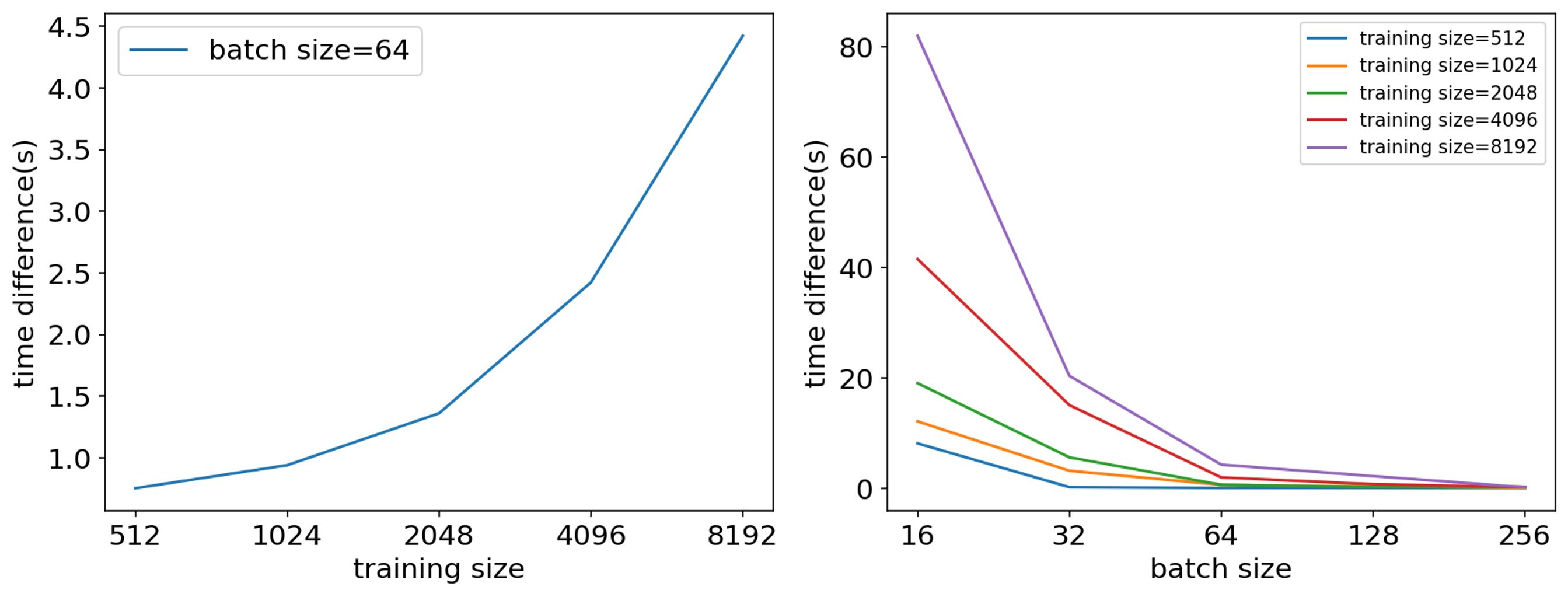}
         \caption{Time differences VS Training size and Time difference VS Batch size}
         \label{td_var}
     \end{figure}
     The left-hand side shows the relationship between time difference and training size, with the batch size fixed at 64. From these two figures, we can conclude that, as the training size grows, the computation time saved by using the simplified function increases. When the batch size is increased, the difference in computation time decreases. However, as the batch size increases, the memory requirement also increases. Therefore, when selecting a batch size, a balance between computation time and memory requirements has to be reached depending on the hardware availability.

    \subsubsection{Ablation Study}
     Using ECochG data, we investigate the contribution from each component of SCOTT. To this end, we train an MLP without encoding, a Temporal Transformer (TT) with the traditional Crossentropy Loss (TT+CE), and TT with SupCon Loss (SCOTT). Comparison of each two consecutive models show the contributions of the encoder and the loss function respectively. We train these models on the original ECochG data, augmented data (denoted as `Aug'), and over-sampled data (via augmentation, denoted as `Ov+Aug').
     \begin{table}[tbp]
      \centering
      \caption{Ablation Study}
      \scalebox{0.91}{%
      \begin{tabular}{|c|ccc|}
        \hline \xrowht{5pt}
        \textbf{Components} & \textbf{Accuracy} & \textbf{AUCROC} & \textbf{AUCPRC} \\ \hline\hline \xrowht{5pt}
        MLP & 0.9706 & 0.9824 & 0.8521 \\
        TT+CE & 0.9851 & 0.9950 & 0.9407 \\
        SCOTT & 0.9912 & 0.9923 & 0.9783 \\ \hline\hline \xrowht{5pt}
        Aug+MLP & 0.9757 & 0.9866 & 0.8870 \\
        Aug+TT+CE & 0.9853 & 0.9929 & 0.9394 \\
        Aug+SCOTT & 0.9902 & 0.9979 & 0.9762 \\ \hline\hline \xrowht{5pt}
        Ov+Aug+MLP & 0.9796 & 0.9903 & 0.9083 \\
        Ov+Aug+TT+CE & 0.9858 & 0.9943 & 0.9418 \\
        Ov+Aug+SCOTT & \textbf{0.9926} & \textbf{0.9989} & \textbf{0.9856} \\ \hline
      \end{tabular}}
      \label{abstd}
    \end{table}

     The results are shown in Table \ref{abstd}. From the first section of the table (rows 1-3), we see that most metrics are better for TT and SCOTT. The most important metric - AUPRC, which gives a more complete view of the performance - is better for TT based models indicating that they have better feature extraction and representation learning capabilities. It also shows that supervised contrastive learning is able to improve these capabilities. By comparing the three sections, we can also draw the conclusion that our proposed augmentation and oversampling methods both improve the model performance.

  \section{Conclusion} \label{con}
   In this paper, we proposed a novel model to learn time series representations with accessible label information. First, we used flexible augmentation methods for different types of data. Especially, for the online CPD application, we designed a data sample-augmentation method. Second, we adapted SupCon Loss together with Transformer and TCN to fit time series data and learn better representations. Finally, using the learnt representations, we used a basic classifier to effectively address TSC tasks. We showed experimentally that the proposed model performed well when compared to SOTA models. We also established its effectiveness on online CPD tasks. This opens the door for research on TSC and online CPD problems using supervised contrastive learning.

  \section*{Acknowledgment}
   This work is partially supported by NHMRC Development Grant Project 2000173 and The University of Melbourne’s Research Computing Services and the Petascale Campus Initiative.

\bibliographystyle{IEEEtran}
\bibliography{supconref}

\end{document}